\DeclareMathOperator{\E}{\mathbb{E}}
\newcommandx{\unsure}[2][1=]{\todo[linecolor=red,backgroundcolor=red!25,bordercolor=red,#1]{#2}}
\newcommandx{\change}[2][1=]{\todo[linecolor=blue,backgroundcolor=blue!25,bordercolor=blue,#1]{#2}}
\newcommandx{\info}[2][1=]{\todo[linecolor=OliveGreen,backgroundcolor=OliveGreen!25,bordercolor=OliveGreen,#1]{#2}}
\newcommandx{\improvement}[2][1=]{\todo[linecolor=Plum,backgroundcolor=Plum!25,bordercolor=Plum,#1]{#2}}
\newcommandx{\thiswillnotshow}[2][1=]{\todo[disable,#1]{#2}}
\newtheorem{theorem}{Theorem}
\newtheorem{lemma}{Lemma}
\newtheorem{proposition}{Proposition}
\theoremstyle{definition}
\newtheorem{remark}{Remark}
\def \extended {1} % defines a variable 'extended' which can take a value either 0 or 1. Toggling to 0 gives 6 page version and toggling to 1 gives extended version on compilation.
\def\thanks#1{\protected@xdef\@thanks{\@thanks
        \protect\footnotetext{#1}}}
\renewcommand\footnoterule{\relax\kern-5pt
\hrule
\kern4.6pt}
\begin{document}
\title{Generalized Dual Discriminator GANs} 

% %%% Single author, or several authors with same affiliation:
% \author{%
%  \IEEEauthorblockN{Author 1 and Author 2}
% \IEEEauthorblockA{Department of Statistics and Data Science\\
%                    University 1\\
 %                   City 1\\
  %                  Email: author1@university1.edu}% }

%%% Several authors with up to three affiliations:
% \author{%
%   \IEEEauthorblockN{Penukonda Naga Chandana}
%   \IEEEauthorblockA{penukonda.chandana@students.iiit.ac.in} 
%   \and
%   \IEEEauthorblockN{Tejas Srivastava}
%   \IEEEauthorblockA{tejas.srivastava@students.iiit.ac.in}
%   \and
%   \IEEEauthorblockN{Gowtham Raghunath Kurri}
%   \IEEEauthorblockA{gowtham.kurri@iiit.ac.in}
%   \and
%   \IEEEauthorblockN{Lalitha Vadlamani}
%   \IEEEauthorblockA{lalitha.v@iiit.ac.in}
%   % \and
%   % \IEEEauthorblockN{Author 2 and Author 3}
%   % \IEEEauthorblockA{Research Center XY\\ 
%   %                   City 2\\
%   %                   Email: }
%   %                     \and
%   % \IEEEauthorblockN{Author 2 and Author 3}
%   % \IEEEauthorblockA{Research Center XY\\ 
%   %                   City 2\\
%   %                   Email: }
% }

\title{Generalized Dual Discriminator GANs}

\author{
    \IEEEauthorblockN{
        Penukonda Naga Chandana*,
        Tejas Srivastava*\thanks{* These authors contributed equally to this work.} \thanks{Gowtham Kurri acknowledges the support of ANRF, Govt. of India, under project  ANRF/ECRG/2024/005472/ENS.},
        % Tejas Srivastava\IEEEauthorrefmark{2},
        Gowtham R. Kurri,
        V. Lalitha
        \\
    }
    \IEEEauthorblockA{International Institute of Information Technology, Hyderabad \\
    \\
    % Email: penukonda.chandana@students.iiit.ac.in, tejas.srivastava@students.iiit.ac.in, \\
    % gowtham.kurri@iiit.ac.in, lalitha.v@iiit.ac.in}
    Email: \{penukonda.chandana@students., tejas.srivastava@students., gowtham.kurri@, lalitha.v@\}iiit.ac.in}
    % \\\IEEEauthorrefmark{2}These authors contributed equally to this work.

}
% add thanks command

\maketitle

%%%%%%
%% Abstract: 
%% If your paper is eligible for the student paper award, please add
%% the comment "THIS PAPER IS ELIGIBLE FOR THE STUDENT PAPER
%% AWARD." as a first line in the abstract. 
%% For the final version of the accepted paper, please do not forget
%% to remove this comment!
%%

\begin{abstract}
   Dual discriminator generative adversarial networks (D2 GANs) were introduced to mitigate the problem of mode collapse in generative adversarial networks. In D2 GANs, two discriminators are employed alongside a generator: one discriminator rewards high scores for samples from the true data distribution, while the other favors samples from the generator. In this work, we first introduce {dual discriminator $\alpha$-GANs (D2 $\alpha$-GANs)}, which combines the strengths of dual discriminators with the flexibility of a tunable loss function, $\alpha$-loss. We further generalize this approach to arbitrary functions defined on positive reals, leading to a broader class of models we refer to as generalized dual discriminator generative adversarial networks. For each of these proposed models, we provide theoretical analysis and show that the associated min-max optimization reduces to the minimization of a linear combination of an $f$-divergence and a {reverse} $f$-divergence. This generalizes the known simplification for D2-GANs, where the objective reduces to a linear combination of the KL-divergence and the reverse KL-divergence. Finally, we perform experiments on 2D synthetic data and use multiple performance metrics to capture various advantages of our GANs.
   
   % In this paper, we present a novel generalized framework for GANs which can be tuned to any specific problem using a custom-tunable loss function while simultaneously solving the key-limitation of GANs, \textit{the mode-collapse problem}. The mode-collapse problem is solved by using a dual discriminator setting, where one discriminator favors data generated from real-distribution, and conversely, the other discriminator rewards high scores for samples generated from the generator. The optimization problem therefore reduces to a min-max game where the final value function is a combination of $f$-divergence and reverse $f$-divergence. The use of a tunable loss function \textit{($\alpha$-loss)} instead of the Binary Cross Entropy loss provides a more universal skeleton for GANs and more control over the type of GAN specific to the target problem. We perform experiments over real-world as well as toy datasets and use multiple performance metrics to capture the various advantages of our GANs.

  % \emph{2025 International Symposium on Information Theory}.
\end{abstract}
% -------------------------------INTRODUCTION------------------------
\section{Introduction}
Generative adversarial networks (GANs)~\cite{NIPS2014_f033ed80} have significantly advanced the field of generative models by offering a powerful framework for learning complex data distributions and generating high-quality samples. The nature and learning behavior of the GAN depend heavily on the choice of the objective function used. The standard approach follows the principle of maximum likelihood estimation, minimizing the Kullback-Leibler (KL) divergence \( D_{\text{KL}}(p_{\text{data}} \| p_{\text{model}}) \), which encourages $p_\text{model}$ to cover multiple modes of $p_\text{data}$, but may generate undesirable samples. Alternatively, minimizing the reverse KL-divergence \( D_{\text{KL}}(p_{\text{model}} \| p_{\text{data}}) \) promotes mode-seeking behavior, causing $p_\text{model}$ to converge on covering specific modes of the data-distribution, while ignoring others. This is unwanted since the trained generator might not be able to generate specific types of samples and focus on producing similar samples with high probability. 

Methods like W-GANs \cite{arjovsky2017wasserstein} and Unrolled-GANs \cite{metz2016unrolled} have emerged to mitigate the issue of mode collapse in GANs. W-GANs achieve this by employing the Wasserstein distance as the loss function, providing a stable training signal and promoting diverse sample generation. Wasserstein distance works well in capturing distance between shapes of distributions in higher dimensional spaces like images. Unrolled-GANs, on the other hand, incorporate the influence of future discriminator updates into the generator's loss, encouraging it to produce varied and realistic outputs. Noting that the vanilla GANs~\cite{NIPS2014_f033ed80} are closely connected to the binary cross-entropy loss function, arbitrary class probability estimator (CPE) loss based GANs are introduced in \cite{welfert2024addressing}. Various other GANs have been studied in the literature with different value functions, e.g., $f$-GAN \cite{nowozin2016f}, IPM
based GANs \cite{arjovsky2017wasserstein},\cite{sriperumbudur2012empirical},\cite{liang2021well}, Cumulant GAN \cite{pantazis2022cumulant}, RenyiGAN \cite{bhatia2021least} and $\mathcal{L}_{\alpha}$-GAN\cite{veiner2024unifying}. Contrary to previous approaches, dual discriminator GANs (D2 GAN),\cite{nguyen2017dual} addresses the problem of mode collapse by incorporating two discriminators where one discriminator favors samples coming from the real distribution, while the other discriminator rewards high scores for samples generated from the generator. The optimization problem for D2 GANs includes both KL-divergence and reverse KL-divergence, which helps to capture multiple modes, thus eliminating the mode collapse problem. 

In this work, we first introduce {dual discriminator $\alpha$-GANs (D2 $\alpha$-GANs)}, which combines the strengths of dual discriminators with the flexibility of a tunable loss function, $\alpha$-loss~\cite{arimoto1971information,LiaoKSC19,SypherdDKDKS22}. We further generalize this approach to arbitrary functions defined on the positive reals, leading to a broader class of models we refer to as generalized dual discriminator GANs. This unified framework recovers both D2 GANs and D2 $\alpha$-GANs as special cases.
By leveraging the adaptability of $\alpha$-loss, our method enables a principled balance between the objectives of the dual discriminators, improving both the diversity and quality of generated samples while enhancing training stability.

Our main contributions are as follows.
\begin{itemize}
    \item We propose dual discriminator GANs based on the $\alpha$-loss that smoothly interpolates exponential loss ($\alpha=0.5$), binary cross-entropy loss ($\alpha=1$) and soft $0$-$1$ loss ($\alpha=\infty$). Our formulation recovers D2 GANs as a special case.
    \item Under the assumption that the generator and discriminators have sufficient parametric capacity, we derive closed-form expressions for the optimal discriminators (Lemma~\ref{lemma:Lemma1}) and show that the resulting min-max optimization reduces to minimizing a linear combination of an $f$-divergence and a {reverse} $f$-divergence (Theorem~\ref{thm:thm1}).
    \item We further extend our framework to define dual discriminator GANs based on arbitrary functions from positive reals to the reals, thereby generalizing both D2-GANs and D2 $\alpha$-GANs. In this setting, we show that the corresponding min-max optimization reduces to minimizing a linear combination of an $f$-divergence and a {reverse} $f$-divergence, where $f$ is determined by the choice of the underlying functions. (Theorem~\ref{thm:them2}).
    \item Finally, we empirically demonstrate the benefits of generalized dual discriminator networks, showing that they enable more stable learning --- characterized by faster convergence and fewer training epochs --- compared to both vanilla GANs and D2-GANs.

\end{itemize}

The paper is organized as follows. In Section~\ref{section:prelim}, we cover the preliminaries, including the vanilla GAN framework, arbitrary CPE loss based GANs, and dual discriminator GANs. In Sections~\ref{section:dualdism-alpha} and \ref{section:generalGANs}, we present D2 $\alpha$-GANs and generalized dual discriminator GANs, respectively, along with their corresponding theoretical analysis. Section~\ref{section:exp} discusses the experiments conducted and their results, and Section~\ref{section:concl} concludes the paper with a related discussion.

% -------------------------------------------------------------------

% -------------------------------PRELIMINARIES-----------------------
\section{Preliminaries}\label{section:prelim}
% In this section, we provide the necessary background to understand the proposed framework $\alpha$-D2GANs. We first give a brief overview of vanilla GANs and the associated issue with such a framework, followed by an introuction to General loss GANs and $\alpha$-loss. Next, we review of the concepts of \textit{forward} and \textit{reverse} KL-divergence, their implications for generative models and how \textit{dual discriminator GANs} use this to solve the issue of mode collapse. 
In this section, we review vanilla GANs~\cite{NIPS2014_f033ed80}, arbitrary CPE loss based GANs~\cite{welfert2024addressing}, and dual discriminator GANs~\cite{nguyen2017dual}.

\subsection{Vanilla Generative Adversarial Nets}
The most basic version of GANs, first introduced by Goodfellow \textit{et al.} \cite{NIPS2014_f033ed80}, involves 2 agents,  \textit{the generator ($G$) and the discriminator ($D$)}. Both are defined by independent deep-neural nets and play a zero-sum, adversarial game. While the generator takes input noise to produce fake samples which resemble the real distribution, the discriminator tries to learn to differentiate between the fake samples produced by the generator according to $P_g$ and those coming from the true distribution $P_d$. As both agents try to learn from the outputs of the other, the optimization reduces to a min-max game between $D$ and $G$: 
\begin{equation}
    \inf _G \sup _D \mathbb{E}_{X \sim P_d}[\log(D(X))]+\mathbb{E}_{X \sim P_g}[\log (1-D(X))].
    \label{vgans-value-function}
\end{equation}

Goodfellow \emph{et al.}~\cite{NIPS2014_f033ed80} showed that when the discriminator class is rich enough, the optimization problem in \eqref{vgans-value-function} simplifies to minimizing the Jensen-Shannon divergence between the real and the generated distributions.
% Since JS-Divergence is empirically known to be of similar nature as that of reverse KL-Divergence (\cite{theis2015note,huszar2015not}), vanilla-GANs suffer from the problem of \textit{mode-collapse}.\gowtham{Is there a reference for this? NIPS tutorial? JSD is the first divergence in GANs, could be confusing if we are validating with KL}

\subsection{Arbitrary CPE Loss Based GANs}

%     \gowtham{The vanilla GANs framework, which is closely connected to binary cross entropy loss, is extended to any incorporate arbitrary class probability estimator (CPE) loss functions in Welfert JSAIT Journal. In particular, the value function in \eqref{vgans-value-function} is generalized as
%     \begin{equation}
%     V(D,G) = \mathbb{E}_{X \sim P_d}[-\ell(1,D(X))] + \mathbb{E}_{X \sim P_{G}}[-\ell(0,D(X))],
%     \label{General-Loss-GANs}
% \end{equation}
% where $l(y,\hat{y})$, for $y\in\{0,1\}$, $\hat{y}\in[0:1]$, is an arbitrary CPE loss function. They show that the resulting min-max optimization problem simplifies to minimizing an associated $f$-divergence $D_f{}$ ()between the real and generated distributions.
%     } 

    The vanilla GANs framework, which is closely connected to binary cross entropy loss, is extended to incorporate any arbitrary class probability estimator (CPE) loss functions in \cite{welfert2024addressing}. In particular, the value function in \eqref{vgans-value-function} is generalized as
    \begin{equation}
    V(D,G) = \mathbb{E}_{X \sim P_d}[-\ell(1,D(X))] + \mathbb{E}_{X \sim P_{g}}[-\ell(0,D(X))],
    \label{General-Loss-GANs}
\end{equation}
where $l(y,\hat{y})$, for $y\in\{0,1\}$, $\hat{y}\in[0,1]$, is an arbitrary CPE loss function. They show that the resulting min-max optimization problem simplifies to minimizing an associated $f$-divergence between the real and generated distributions,  
\begin{equation}
    \mathbf{D}_f(P_d \| P_{g}) = \int_X P_g(x) f\Big(\frac{P_d(x)}{P_g(x)}\Big) \enspace dx,
    \label{f-divergence formula}
\end{equation}
\begin{equation}
    \text{where} \quad
    f(u) = -\inf_{p \in [0,1]} \Big(\ell(1,1-p) + u \cdot \ell(1,p)\Big).
    \label{f-divergence definition}
\end{equation}

They have also considered a special CPE loss, $\alpha$-loss~\cite{arimoto1971information,LiaoKSC19,SypherdDKDKS22}, given by, 
\begin{align}
&\ell_\alpha(y,\hat{y})=\ell_\alpha(\hat{y})\cdot \mathds{1}\{y=1\} + \ell_\alpha(1-\hat{y})\cdot \mathds{1}\{y=0\}, \nonumber
\\
&\text{where} \quad \ell_\alpha(p) = \frac{\alpha}{\alpha-1}\Big(1-p^{\frac{\alpha-1}{\alpha}}\Big), p\in[0,1],
\label{alpha-loss}
\end{align}

 which is a tunable class of loss functions that generalizes well known losses like exponential loss ($\alpha=0.5$), binary cross-entropy loss ($\alpha=1$) and soft $0$-$1$ loss ($\alpha=\infty$). This results in minimizing the Arimoto divergence given by,
 \begin{equation}
     D_{f_\alpha}(P\|Q) = \frac{\alpha}{\alpha-1} \Big(\int_\mathcal{X} (p(x)^\alpha + q(x)^\alpha)^\frac{1}{\alpha} dx - 2^\frac{1}{\alpha}\Big),
     \label{arimoto divergence}
 \end{equation}
 which generalizes the Jensen-Shannon divergence $(\alpha=1)$, squared Hellinger distance $(\alpha=\frac{1}{2})$ and total variation distance $(\alpha=\infty)$.

\subsection{Dual discriminator GANs}
To address the issue of mode collapse in vanilla GANs, the dual discriminator GANs (D2 GANs) \cite{nguyen2017dual} framework was introduced, incorporating two discriminators with complementary roles. The output of both these discriminators are values in $\mathbb{R}^+$ instead of probabilities in $[0,1]$, unlike the above two GANs. For a given point $x$ in the data space, the first discriminator $D_1(x)$ outputs a high score if the data point is sampled from the true distribution, while the second discriminator $D_2(x)$ produces a high score if $x$ is generated by the generator. This setup simulates a three-player min-max game, where the resulting optimization problem simplifies to minimizing a linear combination of the forward and reverse KL-divergences.
\begin{align}
    &\inf_G \sup_{D_1, D_2} \Big(c_1 \times \mathbb{E}_{X \sim P_{\text {d}}}\left[\log D_1(X)\right]+\mathbb{E}_{X \sim P_g}\left[-D_1(X)\right] \nonumber \\
    & +\mathbb{E}_{X \sim P_{\text {d}}}\left[-D_2(X)\right]+
    c_2 \times \mathbb{E}_{X \sim P_{g}}\left[\log D_2(X)\right]\Big) \label{d2-gans-value-function} \\
    =&\inf_G\Big( c_1 \cdot (\text{log}c_1 - 1) + c_2 \cdot (\text{log}c_2 - 1) \nonumber \\ 
    &+ c_1 \cdot D_\text{KL}(P_d \| P_g) + c_2 \cdot D_\text{KL}(P_g \| P_d).\Big)
    \label{d2-gans-optimization-final}
\end{align}

The two directions of KL-divergence are not equivalent \cite[Figure 14]{goodfellow2016nips}. Minimization of forward KL-divergence leads to mean-seeking behavior, averaging out multiple modes and assigning high probability wherever real data is present. On the other hand, reverse KL-divergence results in mode-seeking behavior, preferring a low probability wherever data is absent, even if it results in the model ignoring some of the modes. The forward KL-divergence assures variety, but also leads to unusual samples lying between modes of the data generating distribution.  On the other hand, reverse divergence ensures that model does not produce undesirable data outside of the real distribution modes, but fails to capture variety in the data distribution as a result \cite{goodfellow2016nips}.

Ideally, we don't want our model to output unwanted data while ensuring variety in the data produced. This calls for the need of both, forward as well as reverse KL-divergence in our final optimization function, simultaneously mitigating mode collapse and improving the fidelity of generated samples. The use of two discriminators ensures this. Additionally, hyper-parameters to control the component of each divergence help us tune the GAN to our specific use-case. 
Since the output values of the two discriminators are positive and unbounded, keeping $c_1$ and $c_2$ small can also help stabilize learning by controlling the contribution of the values of $D_1$ and $D_2$ to the value function in \eqref{d2-gans-value-function} therefore regulating the corresponding terms from exploding quickly.

% -------------------------------------------------------------------

% -------------------------------MAIN--------------------------------
\section{Dual discriminator $\alpha$-GANs}\label{section:dualdism-alpha}
The value function \eqref{d2-gans-value-function} in dual discriminator GANs is closely connected to binary cross-entropy loss and soft $0$-$1$ loss. We propose dual discriminator GANs based on $\alpha$-loss~\cite{arimoto1971information,LiaoKSC19,SypherdDKDKS22} that smoothly interpolates exponential loss ($\alpha=0.5$), binary cross-entropy loss ($\alpha=1$) and soft $0$-$1$ loss ($\alpha=\infty$). Formally, in dual discriminator $\alpha$-GANs (D2 $\alpha$-GANs), $G$, $D_1$ and $D_2$ play a three-player game with the corresponding optimization problem:
\vspace{0.1cm}
\begin{align}
    \inf_G \sup_{D_1,D_2} V_\alpha(G,D_1,D_2),
    \label{D2-gans-optimization-problem}
\end{align}
where the value function is given as
\footnote{We have extended the $\alpha$-loss $\ell_\alpha(p)$ in \eqref{alpha-loss} to include any positive real inputs. In particular, we define $l_\alpha:\mathbb{R}^+\rightarrow \mathbb{R}$, as $l_\alpha(t)=\frac{\alpha}{\alpha-1}(1-t^{\frac{\alpha-1}{\alpha}})$.}

\begin{align}
   &V_\alpha(D_1,D_2,G)=  \nonumber \\
    &c_1 \E_{X \sim P_d}\left[-\ell_{\alpha_1} (D_1(X))\right] + \E_{X \sim P_g}\left[\ell_{\alpha_2}(D_1(X)) - 1 \right] \nonumber
    \\ 
    +&\E_{X \sim P_d}\left[\ell_{\alpha_2}(D_2(X))-1\right] +
     c_2 \E_{X \sim P_g}\left[-\ell_{\alpha_1}(D_2(X))\right]. 
    \label{d2-alpha-gans-value-function} 
\end{align}
    
It is easy to see that D2 GANs value function \eqref{d2-gans-optimization-final} can easily be recovered by taking $\alpha_1\rightarrow1$ and $\alpha_2\rightarrow\infty$ as 
\begin{enumerate}
    \item $\lim_{\alpha\rightarrow1}\ell_{\alpha}(t)=-\log(t)$  \text{and}
    \item $ \lim_{\alpha\rightarrow\infty}\ell_{\alpha}(t) = 1-t$.
\end{enumerate}

% \begin{equation}
% \begin{aligned}
%     &V(D_1,D_2,G) = \\
%     &c_1 \E_{X \sim P_d}\left[-l_{\alpha\rightarrow1} (1,D_1(X))\right] + \E_{X \sim P_g}\left[l_{\alpha\rightarrow\infty}(1,D_1(X)) - 1 \right] 
%     \\
%     &+\E_{X \sim P_d}\left[l_{\alpha\rightarrow\infty}(1,D_2(X))-1\right] +
%      c_2 \E_{X \sim P_g}\left[-l_{\alpha\rightarrow\1}(1,D_2(X))\right] 
%     \label{d2-alpha-gans-value-function}
% \end{aligned}
% \end{equation}

% It is easy to see that the D2 GAN value function can be easily recovered by substituting $\alpha_1 \rightarrow 1$ and $\alpha_2 \rightarrow\infty$. It should be noted that the range of $t$ is $\mathbb{R}^+$ instead of the usual $[0,1]$ probability range.

% \hspace{0.8cm}
We now show that, given $G, D_1 $ and $D_2$ have enough parametric capacity, the generator learns the real distribution. We begin by considering the optimization problem in \eqref{D2-gans-optimization-problem} and finding optimal discriminators for a fixed generator, assuming that the discriminators' capacity is sufficiently large.

% \textit{\textbf{Lemma 1 : } 
\begin{lemma}\label{lemma:Lemma1}
For a fixed generator $G$, under the condition $\alpha_2 > \alpha_1$, the discriminators optimizing the $\sup$ in \eqref{D2-gans-optimization-problem} are given by
\vspace{0.1cm}
\begin{equation}
    D_1^*(x) = \left( \frac{c_1 P_d(x)}{P_g(x)} \right)^{\frac{\alpha_1\alpha_2}{\alpha_2 - \alpha_1}} \ \text{and} \ D_2^*(x) = \left( \frac{c_2 P_g(x)}{P_d(x)} \right)^{\frac{\alpha_1\alpha_2}{\alpha_2 - \alpha_1}} 
    \label{optimal-discriminators}
\end{equation}
\label{lemma-optimal-discriminators}
for all $x \in \mathcal{X}$.
\end{lemma}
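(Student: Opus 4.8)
The plan is to exploit the fact that, for a fixed generator $G$, the value function \eqref{d2-alpha-gans-value-function} decomposes into a sum of two functionals, one depending only on $D_1$ and the other only on $D_2$, and that each functional is an integral over $\mathcal{X}$ whose integrand at a point $x$ depends solely on the scalar $D_1(x)$ (resp.\ $D_2(x)$). Since $D_1,D_2$ may be taken to be arbitrary measurable maps $\mathcal{X}\to\mathbb{R}^+$, the supremum commutes with the integral and can be evaluated pointwise: for each fixed $x$ one maximizes over $t>0$
\[
\phi_x(t) \;=\; -\,c_1 P_d(x)\,\ell_{\alpha_1}(t) \;+\; P_g(x)\big(\ell_{\alpha_2}(t)-1\big)
\]
to obtain $D_1^*(x)$, and symmetrically maximizes over $s>0$
\[
\psi_x(s) \;=\; P_d(x)\big(\ell_{\alpha_2}(s)-1\big) \;-\; c_2 P_g(x)\,\ell_{\alpha_1}(s)
\]
to obtain $D_2^*(x)$.

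First I would record the elementary identity $\ell_\alpha'(t) = -\,t^{-1/\alpha}$, immediate from the extended definition $\ell_\alpha(t)=\tfrac{\alpha}{\alpha-1}\big(1-t^{(\alpha-1)/\alpha}\big)$ and consistent with the limiting case $\ell_1(t)=-\log t$. Differentiating gives $\phi_x'(t) = c_1 P_d(x)\,t^{-1/\alpha_1} - P_g(x)\,t^{-1/\alpha_2}$; setting $\phi_x'(t)=0$ and rearranging yields $c_1 P_d(x)/P_g(x) = t^{\,1/\alpha_1 - 1/\alpha_2} = t^{(\alpha_2-\alpha_1)/(\alpha_1\alpha_2)}$, whence $t = \big(c_1 P_d(x)/P_g(x)\big)^{\alpha_1\alpha_2/(\alpha_2-\alpha_1)}$, which is precisely $D_1^*(x)$ in \eqref{optimal-discriminators}. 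The analysis of $\psi_x$ is identical after swapping $P_d\leftrightarrow P_g$ and $c_1\to c_2$, and produces $D_2^*(x)$.

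To certify that these stationary points are maximizers I would invoke the hypothesis $\alpha_2>\alpha_1$ (with $\alpha_1,\alpha_2>0$, so that the exponent $\alpha_1\alpha_2/(\alpha_2-\alpha_1)$ is positive). Factoring $\phi_x'(t) = t^{-1/\alpha_2}\big(c_1 P_d(x)\,t^{\,1/\alpha_2-1/\alpha_1} - P_g(x)\big)$ and using $1/\alpha_2-1/\alpha_1<0$, the bracketed term is strictly decreasing in $t$, strictly positive for small $t$ and strictly negative for large $t$; hence $\phi_x'$ changes sign exactly once, from $+$ to $-$, so $\phi_x$ is unimodal with a unique interior maximizer at the stationary point found above, and in particular the supremum is not attained as $t\to 0^+$ or $t\to\infty$. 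The same factoring works for $\psi_x$.

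The main obstacle is less the calculus than the bookkeeping of the sign/second-order analysis: one must verify that it is genuinely the condition $\alpha_2>\alpha_1$ that forces unimodality (ruling out a minimum or an inflection), and check that the limiting parameter choices $\alpha_1\to 1$, $\alpha_2\to\infty$ --- which recover D2 GANs --- are covered by the same formulas through the limits $\ell_\alpha(t)\to-\log t$ and $\ell_\alpha(t)\to 1-t$ noted after \eqref{d2-alpha-gans-value-function}. The interchange of $\sup$ and $\int$ (equivalently, that a measurable selection of pointwise maximizers defines an admissible discriminator) is routine under the stated assumption that the discriminator classes are sufficiently rich, and I would simply assert it.
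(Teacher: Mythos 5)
Your proposal is correct and follows essentially the same route as the paper's proof: decompose the inner supremum into separate problems for $D_1$ and $D_2$, optimize pointwise in $t=D_i(x)$ using $\ell_\alpha'(t)=-t^{-1/\alpha}$, and solve the first-order condition to get \eqref{optimal-discriminators}. The only (minor, and in fact slightly stronger) difference is that you certify the maximizer via a sign-change/unimodality argument for $\phi_x'$, establishing a unique global maximum on $\mathbb{R}^+$, whereas the paper checks the second derivative at $t^*$ and shows its negativity is equivalent to $\alpha_2>\alpha_1$.
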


\begin{proof}[Proof Sketch] The function $t \rightarrow a\cdot (-\ell_{\alpha_1}(t)) + b\cdot \big( \ell_{\alpha_2}(t)-1\big)$ achieves its maxima at $\mathbb{R}^+$ at $t^*=(a/b)^{\frac{\alpha_1\alpha_2}{\alpha_2-\alpha_1}}$ provided that $\alpha_2>\alpha_1$. The condition $\alpha_2>\alpha_1$ is required for the negativity of the second derivative of the aforementioned function for $t^*$ to be a point of {maxima}. See\if \extended 1 Appendix A for detailed proof.\fi \if \extended 0\cite[Appendix A]{gend2gans} for detailed proof.\fi

\end{proof}

% \textit{Proof Sketch : } The proof for obtaining optimal discriminators follows from taking partial derivative of the value function \eqref{d2-alpha-gans-value-function} with respect to $D_1$ and $D_2$ finding the stationary points. In order to ensure that the discriminators obtained are at maxima, we apply concavity conditions on the double derivative at the stationary points to obtain the condition $\alpha_2 > \alpha_1$. See Appendix E for detailed proof.

\begin{remark}
When $\alpha\rightarrow 1$ and $\alpha_2 \rightarrow \infty$, Lemma \ref{lemma-optimal-discriminators} recovers \cite[Proposition 1]{nguyen2017dual}.
\end{remark}

\begin{remark} 
When $P_d = P_g$, from \textit{Lemma 1}, we obtain the optimal discriminators as, for all $x \in \mathcal{X}$, 
\begin{align}
    D_1^*(x) = \left( {c_1} \right)^{\frac{\alpha_1\alpha_2}{\alpha_2 - \alpha_1}} \ \text{and} \ D_2^*(x) = \left( {c_2} \right)^{\frac{\alpha_1\alpha_2}{\alpha_2 - \alpha_1}}. 
\end{align}
\end{remark}

This implies that, when the generated distribution is equal to the real distribution, the discriminator cannot distinguish generated data from the real data, i.e., the discriminators output constant values independent of the input.  

% \newpage
\begin{theorem}\label{thm:thm1}
Under the condition $\alpha_2 > \alpha_1$ and given fixed optimal discriminators $D_1^*$, $D_2^*$ according to Lemma 1, the optimization problem in \eqref{D2-gans-optimization-problem} simplifies to the following
\vspace{0.1cm}
\begin{equation}
    \inf_G \Big(c_1\cdot \mathbf{D}_{f^{\alpha_1,\alpha_2}_{c_1}}(P_d \| P_g) + c_2 \cdot \mathbf{D}_{f^{\alpha_1,\alpha_2}_{c_2}}(P_g \| P_d)\Big),
    \label{d2-alpha-gans-final-optimization}
\end{equation}
% \begin{equation}
%        \min_{G} \Bigg(\frac{-\alpha_1}{\alpha_1-1}\cdot (c_1+c_2)
%     + \frac{\mathbf{D}_f(P_d \| P_{g})}{c_2^\frac{\alpha_1\alpha_2 - \alpha_1}{\alpha_2-\alpha_1}} + \frac{\mathbf{D}_{f}(P_g \| P_d)}{c_1^\frac{\alpha_1\alpha_2 - \alpha_1}{\alpha_2-\alpha_1}}\Bigg)
%     \label{d2-alpha-gans-final-optimization}
% \end{equation}
\textit{where $f^{\alpha_1,\alpha_2}_c$ is given as}
\vspace{0.2cm}
\begin{align}
    f^{\alpha_1,\alpha_2}_c(u) =& -\frac{\alpha_1}{\alpha_1-1}\Big( u-c^{\frac{\alpha_1\alpha_2-\alpha_2}{\alpha_2-\alpha_1}}\cdot u^{\frac{\alpha_1\alpha_2-\alpha_1}{\alpha_2-\alpha_1}}\Big) \nonumber
    \\
    &-\frac{\alpha_2}{\alpha_2-1}\Big(c^{\frac{\alpha_1\alpha_2-\alpha_2}{\alpha_2-\alpha_1}}\cdot u^{\frac{\alpha_1\alpha_2 - \alpha_1}{\alpha_2-\alpha_1}}\Big).
    \label{d2-alpha-gans-fc}
\end{align}

% \begin{equation}
%  f(u)= \big(c_1 c_2\big)^\frac{\alpha_1\alpha_2-\alpha_1}{\alpha_2-\alpha_1} \Big(\frac{\alpha_1}{\alpha_1-1} - \frac{\alpha_2}{\alpha_2-1}\Big) (u)^\frac{\alpha_1\alpha_2-\alpha_1}{\alpha_2-\alpha_1}
% \end{equation}
\textit{The infimum in \eqref{d2-alpha-gans-final-optimization} is obtained at $G^*$ such that $P_g^{*}=P_d$ as }
\begin{align}
    V_\alpha(&D_1^*, D_2^*, G^*) = -\frac{\alpha_1}{\alpha_1-1}(c_1+c_2)\nonumber
    \\
    + &\Big(\frac{\alpha_1}{\alpha_1-1}-\frac{\alpha_2}{\alpha_2-1}\Big)\cdot\Big( c_1^{\frac{\alpha_1\alpha_2-\alpha_1}{\alpha_2-\alpha_1}}+c_2^{\frac{\alpha_1\alpha_2-\alpha_1}{\alpha_2-\alpha_1}} \Big).
    \label{d2-alpha-gans-minimum-value}
\end{align}

% \begin{equation*}
%     V(D_1^*, D_2^*, G^*) = -\frac{\alpha_1}{\alpha_1-1}\cdot (c_1+c_2) \quad \textit{when} \quad P_d=P_{g}
% \end{equation*}
\end{theorem}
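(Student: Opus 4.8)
The plan is to substitute the optimal discriminators from Lemma~\ref{lemma:Lemma1} back into the value function \eqref{d2-alpha-gans-value-function} and show the result is a linear combination of an $f$-divergence and a reverse $f$-divergence, then minimize over $G$. First I would compute, for the pair of terms involving $D_1$, the pointwise integrand
\[
c_1 P_d(x)\bigl(-\ell_{\alpha_1}(D_1^*(x))\bigr) + P_g(x)\bigl(\ell_{\alpha_2}(D_1^*(x)) - 1\bigr),
\]
using $D_1^*(x) = \bigl(c_1 P_d(x)/P_g(x)\bigr)^{\alpha_1\alpha_2/(\alpha_2-\alpha_1)}$ and the explicit form $\ell_\alpha(t) = \tfrac{\alpha}{\alpha-1}(1 - t^{(\alpha-1)/\alpha})$. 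The key simplification is that the exponent $\tfrac{\alpha_1\alpha_2}{\alpha_2-\alpha_1}$ is chosen precisely so that $D_1^*(x)^{(\alpha_1-1)/\alpha_1} = (c_1 P_d/P_g)^{(\alpha_1\alpha_2 - \alpha_2)/(\alpha_2-\alpha_1)}$ and $D_1^*(x)^{(\alpha_2-1)/\alpha_2} = (c_1 P_d/P_g)^{(\alpha_1\alpha_2 - \alpha_1)/(\alpha_2-\alpha_1)}$; after collecting terms and factoring out $c_1 P_g(x)$, the bracket should match $f^{\alpha_1,\alpha_2}_{c_1}\bigl(P_d(x)/P_g(x)\bigr)$ as defined in \eqref{d2-alpha-gans-fc}, so that this half of $V_\alpha$ integrates to $c_1 \cdot \mathbf{D}_{f^{\alpha_1,\alpha_2}_{c_1}}(P_d\|P_g)$. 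By the symmetry of \eqref{d2-alpha-gans-value-function} under swapping $(P_d, c_1) \leftrightarrow (P_g, c_2)$ and $D_1 \leftrightarrow D_2$, the $D_2$ terms integrate to $c_2 \cdot \mathbf{D}_{f^{\alpha_1,\alpha_2}_{c_2}}(P_g\|P_d)$, using the definition of $\mathbf{D}_f$ from \eqref{f-divergence formula}. This establishes \eqref{d2-alpha-gans-final-optimization}.

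Next I would handle the infimum over $G$. The cleanest route is to argue that each of $f^{\alpha_1,\alpha_2}_{c_1}$ and $f^{\alpha_1,\alpha_2}_{c_2}$ is convex on $\mathbb{R}^+$ (under $\alpha_2 > \alpha_1$), so that the associated $\mathbf{D}_f$ is a genuine $f$-divergence and by Jensen's inequality is minimized when the two arguments coincide, i.e. when $P_g = P_d$; one must also check the constant-shift convention so that the minimum is attained there rather than merely bounded. Alternatively, and perhaps more directly given that the excerpt already computes the optimal discriminators, I would differentiate the pointwise integrand of \eqref{d2-alpha-gans-final-optimization} with respect to the ratio $u = P_d(x)/P_g(x)$ and verify the stationary point is at $u = 1$ with the correct second-order behavior. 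Then I would evaluate $f^{\alpha_1,\alpha_2}_{c_1}(1)$ and $f^{\alpha_1,\alpha_2}_{c_2}(1)$, plug into $V_\alpha(D_1^*, D_2^*, G^*) = c_1 f^{\alpha_1,\alpha_2}_{c_1}(1) + c_2 f^{\alpha_1,\alpha_2}_{c_2}(1)$ (since $\mathbf{D}_f(P\|P) = f(1)$ when $P$ is a probability measure), and simplify: $f^{\alpha_1,\alpha_2}_{c}(1) = -\tfrac{\alpha_1}{\alpha_1-1}(1 - c^{(\alpha_1\alpha_2-\alpha_2)/(\alpha_2-\alpha_1)}) - \tfrac{\alpha_2}{\alpha_2-1} c^{(\alpha_1\alpha_2-\alpha_2)/(\alpha_2-\alpha_1)}$, and multiplying by $c$ gives $-\tfrac{\alpha_1}{\alpha_1-1} c + \bigl(\tfrac{\alpha_1}{\alpha_1-1} - \tfrac{\alpha_2}{\alpha_2-1}\bigr) c^{(\alpha_1\alpha_2-\alpha_1)/(\alpha_2-\alpha_1)}$, since $c \cdot c^{(\alpha_1\alpha_2-\alpha_2)/(\alpha_2-\alpha_1)} = c^{(\alpha_1\alpha_2-\alpha_1)/(\alpha_2-\alpha_1)}$. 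Summing over $c = c_1, c_2$ yields exactly \eqref{d2-alpha-gans-minimum-value}.

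I expect the main obstacle to be the bookkeeping of exponents: one must keep track of three closely related exponents $\tfrac{\alpha_1\alpha_2}{\alpha_2-\alpha_1}$, $\tfrac{\alpha_1\alpha_2-\alpha_2}{\alpha_2-\alpha_1}$, and $\tfrac{\alpha_1\alpha_2-\alpha_1}{\alpha_2-\alpha_1}$, and repeatedly use the identities relating them (e.g. multiplying the second by $\tfrac{\alpha_1-1}{\alpha_1}$-type factors, or noting the first minus the second equals $\tfrac{\alpha_2}{\alpha_2-\alpha_1}$). A secondary subtlety is the justification that the infimum over $G$ is actually achieved at $P_g = P_d$: since $\mathbf{D}_f$ here is defined without the usual normalization $f(1)=0$, one should either verify convexity of $f^{\alpha_1,\alpha_2}_c$ directly (a second-derivative computation, again leaning on $\alpha_2 > \alpha_1$) so that the standard $f$-divergence lower bound applies, or invoke the fact that the pointwise maximization in Lemma~\ref{lemma:Lemma1} combined with a pointwise-in-$x$ optimality argument pins down $P_g = P_d$. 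I would present the convexity route, noting that it parallels the CPE-loss analysis of \cite{welfert2024addressing}, and defer the routine derivative checks to an appendix.
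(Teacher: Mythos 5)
Your proposal follows essentially the same route as the paper's proof: substitute the optimal discriminators from Lemma~\ref{lemma:Lemma1} into \eqref{d2-alpha-gans-value-function}, collect the exponents $\tfrac{\alpha_1\alpha_2-\alpha_2}{\alpha_2-\alpha_1}$ and $\tfrac{\alpha_1\alpha_2-\alpha_1}{\alpha_2-\alpha_1}$ to identify the integrand with $c_1 P_g f^{\alpha_1,\alpha_2}_{c_1}(P_d/P_g) + c_2 P_d f^{\alpha_1,\alpha_2}_{c_2}(P_g/P_d)$, and then evaluate the infimum via $\mathbf{D}_f(P\|P)=f(1)$, with the algebra for $c\,f^{\alpha_1,\alpha_2}_c(1)$ matching \eqref{d2-alpha-gans-minimum-value} exactly. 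Your explicit insistence on verifying convexity of $f^{\alpha_1,\alpha_2}_c$ before invoking the $f$-divergence lower bound is a point the paper's appendix leaves implicit, but it does not change the argument.
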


\begin{proof}[Proof Sketch] We first substitute the optimal discriminators in \eqref{optimal-discriminators} into the value function \eqref{d2-alpha-gans-value-function}. A careful simplification of the expression thus obtained gives \eqref{d2-alpha-gans-final-optimization} with the convex function $f_c^{\alpha_1,\alpha_2}$ in \eqref{d2-alpha-gans-fc}. The infimum in \eqref{d2-alpha-gans-final-optimization} is obtained as $c_1 \cdot f^{\alpha_1,\alpha_2}_{c_1}(1) + c_2 \cdot f^{\alpha_1,\alpha_2}_{c_2}(1)$ since the minimum value of $\mathbf{D}_f(P\|Q)$ is $f(1)$ at $P=Q$.  \if \extended 1 For detailed proof, see Appendix B. \fi \if \extended 0 For detailed proof, see\cite[Appendix B]{gend2gans}.\fi
\end{proof}

\begin{remark}
When $\alpha_1\rightarrow1$ and $\alpha_2\rightarrow\infty$, Theorem 1 recovers the \cite[Theorem 1]{nguyen2017dual}. Analogous to the linear combination of KL-divergence and reverse KL-divergence in \eqref{d2-gans-optimization-final} for dual discriminator GANs, Theorem 1 implies that the corresponding min-max optimization problem in D2 $\alpha$-GANs simplifies to the minimization of a linear combination of $f_c$-divergence with $c=c_1$ and {reverse} $f_c$-divergence with $c=c_2$.
 \if \extended 1 A detailed proof is given in Appendix C. \fi \if \extended 0 A detailed proof is given in \cite[Appendix C]{gend2gans}.\fi
% A detailed proof is given in Appendix \ref{proof-remark3}.

\end{remark}

\section{Generalized dual discriminator GANs}\label{section:generalGANs}
In this section, we propose dual discriminator GANs based on arbitrary functions from positive reals to the real values, thereby generalizing both D2 GANs and D2 $\alpha$-GANs. In particular, $G$, $D_1$ and $D_2$ play a three-player game with the corresponding optimization problem: 
\begin{align}
    \inf_G \sup_{D_1,D_2} V_\ell(G,D_1,D_2),
    \label{D2-gengans-optimization-problem}
\end{align}
where
\begin{align}
    &V_\ell(G, D_1, D_2) = \nonumber \\
        &c_1 \cdot \mathbb{E}_{X\sim P_d}[-\ell_{1}(D_1(X))] + \mathbb{E}_{X \sim P_{g}}[\ell_{2}(D_1(X))-1] \nonumber
        \\& + \mathbb{E}_{X \sim P_d}[\ell_{2}(D_2(X))-1] + c_2 \cdot \mathbb{E}_{X \sim P_{g}} [-\ell_{1}(D_2(X))]
        \label{general-loss-value-function}
\end{align}
and $\ell_1:\mathbb{R}^+\rightarrow \mathbb{R}$ and $\ell_2:\mathbb{R}^+\rightarrow \mathbb{R}$. 

 Under sufficient generator and discriminator capacities, we show that the min-max optimization in \eqref{D2-gengans-optimization-problem} reduces to minimizing a linear combination of an $f$-divergence and a {reverse} $f$-divergence, where $f$ is determined by the choice of the underlying functions $\ell_1$ and $\ell_2$.

\begin{theorem}\label{thm:them2}
    For any given functions $\ell_1:\mathbb{R}^+\rightarrow \mathbb{R}$ and $\ell_2:\mathbb{R}^+\rightarrow \mathbb{R}$, the min-max optimization in \eqref{D2-gengans-optimization-problem} simplifies to 
    \begin{align}\label{eqn:thm2:opt}
        \inf_G \Big( &c_1  \cdot \mathbf{D}_{f_{c_1}}(P_d\|P_g) + c_2 \cdot \mathbf{D}_{f_{c_2}}(P_g\|P_d) \Big),
        \end{align}
        {where}
        \begin{align}\label{eqn:fc}
        f_{c}(u)=\sup_{t \in \mathbb{R}^+}\Big(-u\cdot \ell_1(t) + \frac{\ell_2(t)}{c} \Big).
        % \\
        % &f^{'}_{c_2}(u)=\max_{t \in \mathbb{R}^+}\Big(-\hat{\ell}_1(t) + \frac{u}{c_2}  \cdot \hat{\ell}_2(t)\Big)
        % \\
        % \text{and} \quad & f^{'}_c(u) = u \cdot f_c\Big(\frac{1}{u}\Big) \quad \text{is satisfied}
    \end{align}
\end{theorem}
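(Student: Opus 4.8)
The plan is to decouple the inner supremum over the two discriminators, optimize each one pointwise in $x$ (using the sufficient-capacity assumption), and recognize the resulting one-dimensional optimizations as exactly the definition of $f_c$ in \eqref{eqn:fc}.

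First I would observe that in \eqref{general-loss-value-function} the discriminator $D_1$ enters only through the first two terms and $D_2$ only through the last two, so $\sup_{D_1,D_2}V_\ell(G,D_1,D_2)$ splits as a sum of two independent suprema. Writing the expectations as integrals against a common dominating measure and using that the discriminators can be chosen freely (and measurably) as functions of $x$, I push each supremum inside the integral and maximize the integrand pointwise over $t=D_i(x)\in\mathbb{R}^+$. For the $D_1$ part the pointwise objective at $x$ is $c_1 P_d(x)\bigl(-\ell_1(t)\bigr)+P_g(x)\bigl(\ell_2(t)-1\bigr)$; separating the $t$-independent term $-P_g(x)$ and pulling the positive factor $c_1P_g(x)$ out of the rest turns the $t$-dependent part into $c_1P_g(x)\bigl(-\tfrac{P_d(x)}{P_g(x)}\ell_1(t)+\tfrac{1}{c_1}\ell_2(t)\bigr)$, whose supremum over $t$ is $c_1P_g(x)\,f_{c_1}\!\bigl(P_d(x)/P_g(x)\bigr)$ by \eqref{eqn:fc}. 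Integrating and using $\int P_g=1$ gives $c_1\mathbf{D}_{f_{c_1}}(P_d\|P_g)-1$, with $\mathbf{D}_f$ the $f$-divergence of \eqref{f-divergence formula}. The $D_2$ part is handled symmetrically: its pointwise objective is $P_d(x)\bigl(\ell_2(t)-1\bigr)+c_2P_g(x)\bigl(-\ell_1(t)\bigr)$, and pulling out $c_2P_d(x)$ and invoking \eqref{eqn:fc} with argument $P_g(x)/P_d(x)$ yields, after integration, $c_2\mathbf{D}_{f_{c_2}}(P_g\|P_d)-1$. Summing, $\sup_{D_1,D_2}V_\ell(G,D_1,D_2)=c_1\mathbf{D}_{f_{c_1}}(P_d\|P_g)+c_2\mathbf{D}_{f_{c_2}}(P_g\|P_d)-2$, and taking $\inf_G$ recovers \eqref{eqn:thm2:opt} up to the additive constant $-2$, which does not affect the minimizer; as a sanity check one verifies that $\ell_1=-\log,\ \ell_2(t)=1-t$ reproduces \eqref{d2-gans-optimization-final} and $\ell_i=\ell_{\alpha_i}$ reproduces Theorem~\ref{thm:thm1}.

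The step I expect to require the most care is the interchange of supremum and integral, i.e.\ justifying that the pointwise maximizer can be taken as a measurable function of $x$ and that the extracted factors $c_1P_g(x)$ and $c_2P_d(x)$ are strictly positive wherever it matters; this is the standard caveat, here absorbed into the ``sufficient capacity'' hypothesis together with the usual support conventions for $f$-divergences. Note that, unlike Theorem~\ref{thm:thm1}, no convexity of $f_c$ nor attainment of the inner supremum is needed for this reduction, which is precisely why the statement does not pin down the optimal generator $G$.
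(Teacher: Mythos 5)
Your proposal is correct and follows essentially the same route as the paper's proof: split the inner supremum over $D_1$ and $D_2$, push it inside the integral to a pointwise optimization over $t=D_i(x)$, factor out $c_1P_g(x)$ (resp.\ the analogous factor for $D_2$), and recognize the definition of $f_c$ in \eqref{eqn:fc}, with the $D_2$ term converted to the reverse divergence via $u\,f_{c_2}(1/u)$. The only difference is cosmetic: you explicitly carry the additive constant $-2$ coming from the $\E[-1]$ terms (correctly noting it does not affect the minimizer), whereas the paper's proof silently omits those terms.
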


\begin{proof}[Proof Sketch] We decompose the inner optimization problem in \eqref{D2-gengans-optimization-problem} into separate optimization problems over $D_1$ and $D_2$, and further simplify them to obtain \eqref{eqn:thm2:opt}. The function $f_c$ in \eqref{eqn:fc} is convex, as it is defined as the supremum of affine functions.  \if \extended 1 A detailed proof is given in Appendix D. \fi \if \extended 0 A detailed proof is given in \cite[Appendix D]{gend2gans}.\fi

\end{proof}

\begin{remark}
    The generalized dual discriminator GANs optimization problem \eqref{D2-gengans-optimization-problem}  recovers the optimization problem for D2 $\alpha$-GANs \eqref{D2-gans-optimization-problem} when $\ell_1=\ell_{\alpha_1}$ and $\ell_2=\ell_{\alpha_2}$.
\end{remark}

% -------------------------------------------------------------------

% -------------------------------EXPERIMENTS------------------------
\section{Experiments}\label{section:exp}
We perform experiments on a synthetic dataset to demonstrate the effectiveness of D2 $\alpha$-GANs in its versatility and solving issues of mode collapse while generalizing a number of loss functions. 

\subsection{Synthetic Dataset : Mixture of Gaussians}
In this experiment, we re-create the experimental setup in \cite{metz2016unrolled} to generate a synthetic dataset of mixture of bivariate gaussians arranged in a circle. The real data distribution contains 8 Gaussians with a covariance matrix  0.02\textit{I}. The centers of the Gaussians are arranged uniformly in a circle of radius 2. The gaussians are well separated with almost no overlap. Such a setting is good for evaluating performance of our models with multiple regions of very high and very low densities so that issues like mode-collapse  can be captured effectively. As in \cite{nguyen2017dual} and \cite{metz2016unrolled}, the generator uses two ReLU 128-size hidden layers and a linear output to map noise to 2D samples, while the discriminator employs one ReLU layer and a softplus output for classification. This lightweight architecture efficiently learns simple data-distribution without unnecessary complexity, as the low-dimensional data requires minimal modeling capacity. 

%\textbf{Add figure here}
\begin{figure}[h]
  \includegraphics[width=0.5\textwidth]{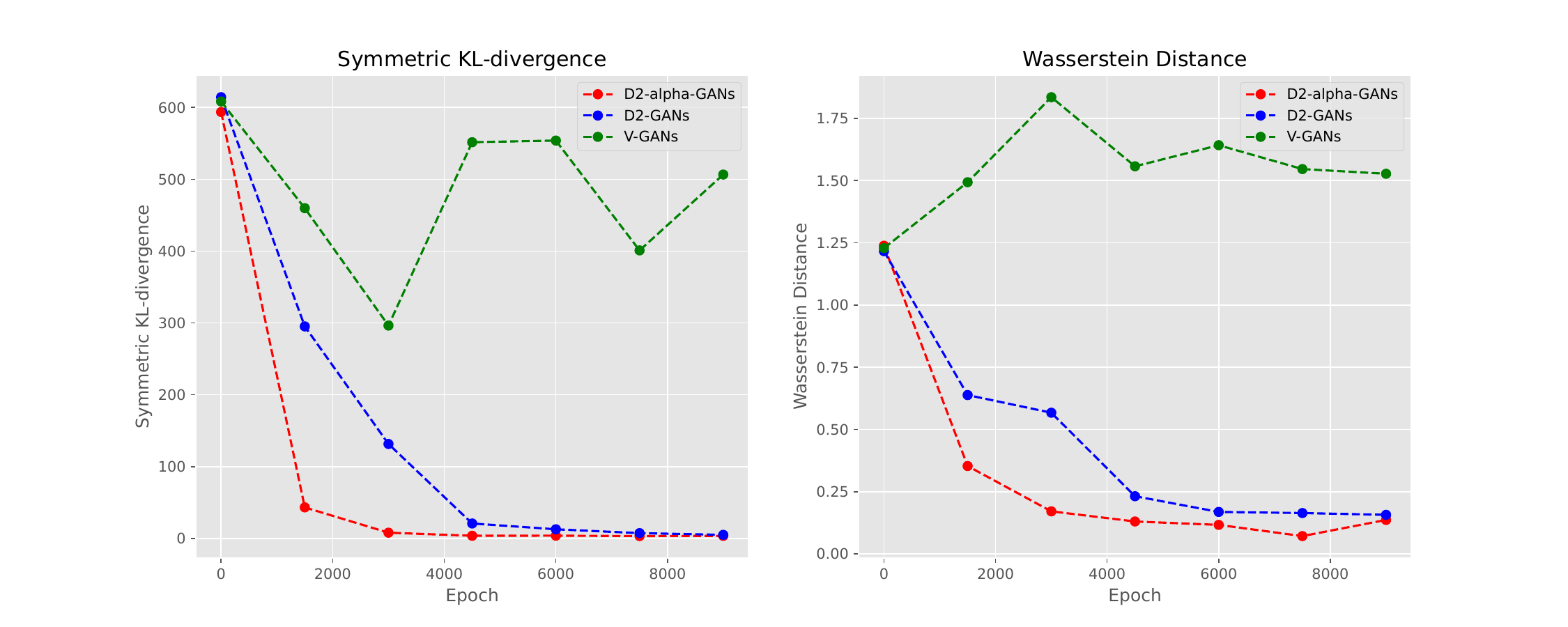}
  \caption{Comparison Symmetric KL-divergence and Wasserstein Distance for vanilla-GANs, D2 GANs, and D2 $\alpha$-GANs}
  \label{fig:MoG_distances}
\end{figure}
\begin{figure}[h]
  \includegraphics[width=0.5\textwidth]{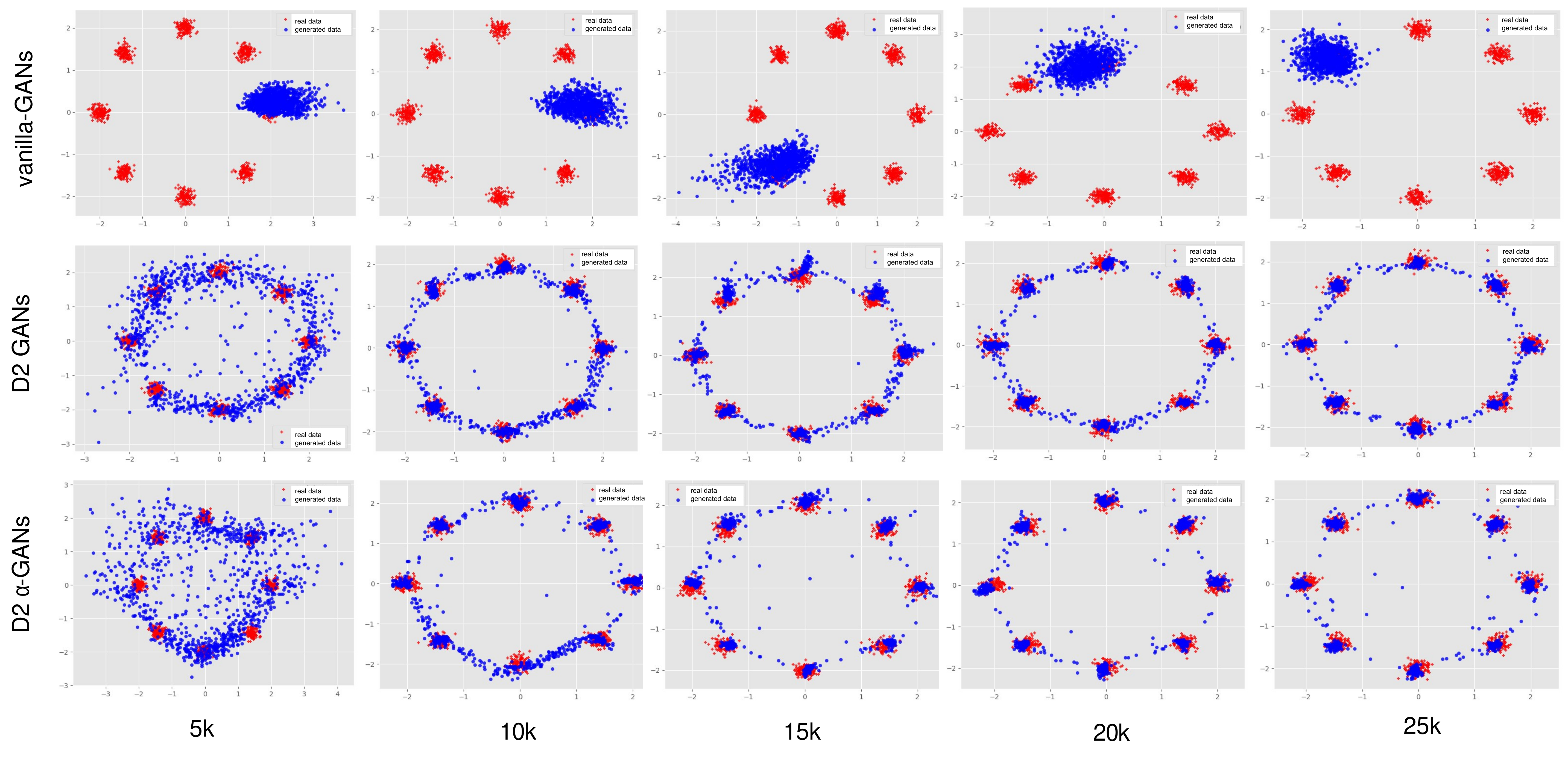}
  \caption{Visualization of Mode coverage for vanilla-GANs, D2 GANs and D2 $\alpha$-GANs for 25k epochs. Data sampled from the true distribution is in red and the data generated by the generator is in blue.}
  \label{fig:MoG_visualization}
\end{figure}

We train the model for 25k epochs with a batch-size of 512 and plot 1k samples every 5k epochs to track the evolution of our model. We have also performed hyperparameter tuning for our model with different values for $\alpha_1$, $\alpha_2$, $c_1$, $c_2$, learning rate and have considered one of the best set of parameters for demonstration. The values used for the analysis are $\alpha_1=0.6$, $\alpha_2=0.9$, $c_1=0.01$, $c_2=1.5$, learning rate = $0.001$ and random seed = $712$ (for D2 $\alpha$-GANs) and $c_1=1.2$, $c_2=1.0$ and learning rate = $0.0002$ (for D2 GANs). 
We can observe that while vanilla-GANs fails to capture multiple modes and collapses on different data-modes throughout the training, D2 GANs and D2 $\alpha$-GANs quickly spread around to capture all the modes of the data and later converge the distribution to capture all the modes separately. As compared to D2 GANs, D2 $\alpha$-GANs tend to capture all modes quickly and display a more stable learning. The same observation can be made from the plots of distance metrics like \textit{symmetric KL-divergence} and \textit{Wasserstein distance} as shown in Figure \ref{fig:MoG_distances}.

The symmetric KL divergence \cite{yao2025symmetric} is given by:
\begin{equation*}
    d(P_d,P_g) = D_{KL}(P_d\|P_g) + D_{KL}(P_g\|P_d),
\end{equation*}
and the Wasserstein distance \cite{arjovsky2017wasserstein} is given by,
\begin{equation*}
W({P}_d, {P}_g) = \inf_{\gamma \in \Pi({P}_d, {P}_g)} \mathbb{E}_{(x,y) \sim \gamma} \left[ \| x - y \| \right].
\end{equation*}
A lower value of symmetric KL divergence and Wasserstein distance signify a better model. While the Wasserstein metric, the distance from the model to the true distribution, approaches to zero for both D2 GANs and D2 $\alpha$-GANs, our model converges faster than D2 GANs, illustrating the superiority of our model.

% -------------------------------------------------------------------

% -------------------------------CONCLUSION------------------------
\section{Discussion}\label{section:concl}
The use of $\alpha$-loss in place of the standard cross-entropy loss in the vanilla GANs value function introduces a tunable parameter, allowing $\alpha$-GANs~\cite{welfert2024addressing} to capture a range of GAN formulations within a unified framework. A natural question is whether it is possible to introduce two separate tunable parameters --- one for the discriminator and one for the generator --- within the same value function. However, the following proposition shows that such a formulation is not meaningful. The key reason is that when the data distribution $P_d$ matches the generator distribution $P_g$, the optimal discriminator must output random values, i.e., $D^*(x)=\frac{1}{2}$, for all $x\in\mathcal{X}$, which is not the case in this setting.

\begin{proposition}\label{prop1}
Consider the optimization problem:
\begin{align}
\sup_D \mathbb{E}_{X \sim P_d}[-\ell_{\alpha_1}(1,D(X))] + \mathbb{E}_{X \sim P_{g}}[-\ell_{\alpha_2}(0,D(X))].
\end{align}
When $P_d=P_g$, for the optimal discriminator to be $D^*(x)=\frac{1}{2}$, we should have $\alpha_1=\alpha_2$.
\end{proposition}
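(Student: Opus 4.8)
The plan is to push the supremum over discriminators inside the expectation, reducing it to a pointwise maximization, and then invoke the first-order necessary condition at the assumed optimizer $D^*(x)=\tfrac{1}{2}$. By the definition of $\alpha$-loss in \eqref{alpha-loss} we have $-\ell_{\alpha_1}(1,D(X))=-\ell_{\alpha_1}(D(X))$ and $-\ell_{\alpha_2}(0,D(X))=-\ell_{\alpha_2}(1-D(X))$, so the objective is
\[
\int_{\mathcal{X}}\Big(P_d(x)\big(-\ell_{\alpha_1}(D(x))\big)+P_g(x)\big(-\ell_{\alpha_2}(1-D(x))\big)\Big)\,dx .
\]
Since $D$ ranges over all measurable maps $\mathcal{X}\to[0,1]$ and the integrand at $x$ depends only on the scalar $D(x)$, the supremum is attained by maximizing, for each fixed $x$, the one-variable function
\[
g_x(p)=P_d(x)\big(-\ell_{\alpha_1}(p)\big)+P_g(x)\big(-\ell_{\alpha_2}(1-p)\big),\qquad p\in[0,1].
\]

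Next I would use the elementary identity $\frac{d}{dp}\big(-\ell_\alpha(p)\big)=p^{-1/\alpha}$, valid for every $\alpha\in(0,\infty]$ (with $p^{0}=1$ when $\alpha=\infty$, and with $-\ell_1(p)=\log p$ obtained in the limit $\alpha\to1$). Hence $g_x$ is differentiable on $(0,1)$ with
\[
g_x'(p)=P_d(x)\,p^{-1/\alpha_1}-P_g(x)\,(1-p)^{-1/\alpha_2}.
\]
If the maximizer of $g_x$ is the interior point $D^*(x)=\tfrac{1}{2}$, then stationarity forces $g_x'\big(\tfrac{1}{2}\big)=0$, i.e. $P_d(x)\,(\tfrac{1}{2})^{-1/\alpha_1}=P_g(x)\,(\tfrac{1}{2})^{-1/\alpha_2}$. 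Imposing $P_d=P_g$ (so that $P_d(x)=P_g(x)>0$) cancels the distribution values and leaves $2^{1/\alpha_1}=2^{1/\alpha_2}$; taking logarithms gives $1/\alpha_1=1/\alpha_2$, hence $\alpha_1=\alpha_2$, as claimed.

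For completeness I would also record the converse: since $p\mapsto p^{-1/\alpha}$ is strictly decreasing, $-\ell_\alpha$ is strictly concave, so $g_x$ is strictly concave with a unique maximizer; when $\alpha_1=\alpha_2$ and $P_d=P_g$, $g_x$ is symmetric about $p=\tfrac{1}{2}$, forcing that unique maximizer to be $\tfrac{1}{2}$. I expect the only delicate point to be the bookkeeping for the extended $\alpha$-loss across the regimes $\alpha\in(0,1)$, $\alpha=1$, $\alpha>1$ and $\alpha=\infty$ --- in particular justifying the derivative formula $p^{-1/\alpha}$ uniformly and confirming that the pointwise reduction of the supremum is legitimate in each regime; once these are settled, the proposition follows from a single first-order condition.
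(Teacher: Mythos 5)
Your proposal is correct and follows essentially the same route as the paper's proof: reduce the supremum to a pointwise maximization, differentiate the integrand to get the stationarity condition $P_d(x)\,D(x)^{-1/\alpha_1}=P_g(x)\,(1-D(x))^{-1/\alpha_2}$, and evaluate at $D^*(x)=\tfrac12$ with $P_d=P_g$ to force $\alpha_1=\alpha_2$. Your added remarks on strict concavity and the converse direction are a nice bonus but not part of the paper's argument.
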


 \if \extended 1 A detailed proof is given in Appendix E. \fi \if \extended 0 A detailed proof is given in \cite[Appendix E]{gend2gans}. \fi Thus, we cannot have two tunable parameters in value function in the setting of $\alpha$-GANs with a single discriminator. 
Interestingly, the dual discriminator framework of our D2 $\alpha$-GANs provides us with the flexibility of using two different $\alpha$-losses in \eqref{d2-gans-value-function}.

% \begin{remark}
% Note that when defining the value function for our dual discriminator $\alpha$-GANs, we only use the bivariate input $\alpha$-loss $\ell_\alpha(y,\hat{y})$ in terms of margin-based univariate input loss $\ell_\alpha(1,\hat{y})$ and not $\ell_\alpha(0,\hat{y})$. In \eqref{general-loss-value-function} however, the same term can be written in terms of either of the univariate losses. This is not the case in dual discriminator GANs, since the inputs to the loss are not in $[0,1]$ but in $\mathbb{R}^+$. If we were to use $\ell_\alpha(0,1)$, then the loss outputs would become imaginary for some values of $\alpha$. Suppose we use margin-based loss $\hat{\ell}_\alpha$ defined as
% \\
% \begin{equation*}
%     \hat{\ell}_\alpha(t) = \ell_\alpha(0,t) = \frac{\alpha}{\alpha-1}\cdot\Big(1-(1-t)^{\frac{\alpha-1}{\alpha}}\Big)
% \end{equation*}
% \end{remark}
% In such case, for $\alpha \in (1,\infty)$ and $t > 1$, $\hat{\ell}_\alpha$ is not real.

% -------------------------------------------------------------------

% ------------------------REFERENCES-----------------------
% \newpage
\bibliographystyle{IEEEtran}
\bibliography{bibliofile}
 % -------------------------------------------------------

%%%%%%
%% Appendix:
%% If needed a single appendix is created by
%%
%\appendix
%%
%% If several appendices are needed, then the command
%%
% \appendices
%%
%% in combination with further \section commands can be used.
%%%%%%

% \newpage

\if \extended 1
\appendix

% -------------------------------------------------

\subsection{Proof of Lemma 1}
\label{proof-lemma1}

It is easy to see that the value function in equation (3) can be re-written by expanding the expectation as
% \begin{equation}
% \begin{split}
\begin{align}
    V(D_1,D_2,G) =
    \int_{s\mathcal{X}}^{ }-c_{1}\cdot  \ell_{\alpha_{1}}\left(D_{1}\left(x\right)\right)\cdot P_{d}\left(x\right)\ \nonumber
    \\ +\left(\ell_{\alpha_{2}}\left(D_{1}\left(x)\right)\right)-1\right)\cdot P_{g}\left(x\right)\
    \nonumber\\+\left(\ell_{\alpha_{2}}\left(D_{2}\left(x\right)\right)-1\right)\cdot P_{d}\left(x\right)\ \nonumber
    \\
    -c_{2}\cdot \ell_{\alpha_{1}}\left(D_{2}\left(x\right)\right)\cdot P_{g}(x)\cdot  dx \nonumber
\end{align}
% \end{split}
% \end{equation}
 Simplifying the value function, the term inside the integral (let us call it $K(D_1,D_2,G)$) is given by :-
\begin{align}
K(D_1,D_2,G) = 
    -c_{1}\cdot\frac{\alpha_{1}}{\alpha_{1}-1}P_{d}\left(x\right)\left[1-D\left(x\right)^{\frac{\alpha_{1}-1}{\alpha_{1}}}\right] \nonumber
    \\+
    \frac{\alpha_{2}}{\alpha_{2}-1}P_{g}\left(x\right)\left[-D_{1}\left(G\left(z\right)\right)^{\frac{\alpha_{2}-1}{\alpha_{2}}}\right] \nonumber
    \\+
    \frac{\alpha_{2}}{\alpha_{2}-1}P_{d}\left(x\right)\left[-D_{2}\left(x\right)^{\frac{\alpha_{2}-1}{\alpha_{2}}}\right]\ \nonumber
    \\-
    c_{2}\cdot\frac{\alpha_{1}}{\alpha_{1}-1}P_{g}\left(x\right)\left[1-D_{2}\left(G\left(z\right)\right)^{\frac{\alpha_{1}-1}{\alpha_{1}}}\right].
    \label{K-reduced-value-function}
\end{align}
Now, let us define a function $h(t)$ as
\begin{equation*}
    h(t)=a\cdot(-\ell_{\alpha_1}(t)) +b\cdot (\ell_{\alpha_2}(t)-1).
\end{equation*}
Taking the derivative of $h(t)$ with respect to $t$ and equating to 0, we get
\begin{align}
    &h(t)=-\frac{a\cdot\alpha_1}{\alpha_1-1} \Big(1-t^{\frac{\alpha_1-1}{\alpha_1}}\Big)+b \Big(\frac{\alpha_2}{\alpha_2-1}\Big(1-t^{\frac{\alpha_2-1}{\alpha_2}}\Big)-1\Big)\nonumber \\
    &h^{'}(t^*)=a \cdot (t^*)^{-1/\alpha_1}-b\cdot (t^*)^{-1/\alpha_2}=0 \nonumber \\
    &\implies  t^*=(a/b)^{\frac{\alpha_1\alpha_2}{\alpha_2-\alpha_1}}
\end{align}
Now taking double derivative of $h(t)$ and ensuring its negativity, 
\begin{align*}
    &h^{''}(t^*)=-\frac{a}{\alpha_1}\cdot (t^*)^{-1/\alpha_1-1} + \frac{b}{\alpha_2}\cdot (t^*)^{-1/\alpha_2-1} <0 \nonumber\\
    \implies& \frac{b}{\alpha_2}(t^*)^{-1/\alpha_2-1} < \frac{a}{\alpha_1}(t^*)^{-1/\alpha_1-1}\nonumber\\
    \implies &(t^*)^{\frac{\alpha_2-\alpha_1}{\alpha_1\alpha_2}} < \frac{a}{b}\cdot \frac{\alpha_2}{\alpha_1} \nonumber \\
    \implies & \alpha_2 > \alpha_1
\end{align*}
Therefore, given $a,b,$ and $t$ are values in $\mathbb{R}^+$, $h(t)$ obtains a maxima at $t^*=(a/b)^{\frac{\alpha_1\alpha_2}{\alpha_2-\alpha_1}}$ under the condition $\alpha_2 > \alpha_1$. Now noting that the inner optimization in \eqref{d2-gans-value-function} decomposes into sum of two maximizations with the same objective function $h(t)$ over the respective discriminators $D_1$ and $D_2$, this result gives the optimal discriminators as follows. 

\begin{equation*}
        D_1^*(x) = \left( \frac{c_1 P_d(x)}{P_g(x)} \right)^{\frac{\alpha_1\alpha_2}{\alpha_2 - \alpha_1}} \ \text{and} \ D_2^*(x) = \left( \frac{c_2 P_g(x)}{P_d(x)} \right)^{\frac{\alpha_1\alpha_2}{\alpha_2 - \alpha_1}} 
\end{equation*}
\textit{for all }$x \in \mathcal{X}$.

% -------------------------------------------------

\subsection{Proof of Theorem 1}
\label{proof-theorem1}
The value function in \eqref{d2-alpha-gans-value-function} can be written as 
% Using the $\alpha$-loss from \eqref{alpha-loss} and opening up the expectation, we have value function from \eqref{d2-alpha-gans-value-function} as
\begin{align}
    &V(D_1,D_2,G) = \nonumber
    \\
    &\int_{\mathcal{X}} -c_1 \ell_{\alpha_1}(1,D_1(x)) P_d(x)
    +(\ell_{\alpha_2}(1,D_1(x))-1) P_g(x) \nonumber
    \\
    &+ (\ell_{\alpha_2}(1,D_2(x))-1) P_d(x)
    -c_2\ell_{\alpha_1}(1,D_2(x)) P_g(x) \cdot dx.
\end{align}
Let the term inside the integral be $K(D_1,D_2,G)$ so that the value function is 
\begin{equation*}
    V(D_1,D_2,G)=\int_{\mathcal{X}} K(D_1,D_2,G) \cdot dx.
\end{equation*}
 Substituting the optimal discriminators from Lemma \ref{lemma-optimal-discriminators}, we get 
\begin{align}
    K(D_1^*&,D_2^*,G) \nonumber
    \\
    =& \frac{-c_1\alpha_1}{\alpha_1-1}\cdot P_d(x)\cdot \Big(1-\Big(\frac{c_1 P_d(x)}{P_g(x)}\Big)^\frac{\alpha_1 \alpha_2 - \alpha_2}{\alpha_2 - \alpha_1} \Big) \nonumber
    \\
    & \frac{-\alpha_2}{\alpha_2-1}\cdot P_g(x) \cdot \Big(\frac{c_1 P_d(x)}{P_g(x)}\Big)^\frac{\alpha_1 \alpha_2 - \alpha_1}{\alpha_2 - \alpha_1} \nonumber
    \\
    &\frac{-\alpha_2}{\alpha_2-1}\cdot P_d(x)\cdot\Big( \frac{c_2 P_g(x)}{P_d(x)}\Big)^\frac{\alpha_1 \alpha_2 - \alpha_1}{\alpha_2 - \alpha_1} \nonumber
    \\
    &\frac{-c_2  \alpha_1}{\alpha_1-1}\cdot P_g(x) \cdot \Big(1-\Big(\frac{c_2 P_g(x)}{P_d(x)}\Big)^\frac{\alpha_1 \alpha_2 - \alpha_2}{\alpha_2 - \alpha_1}\Big) \nonumber
    \\
    =& \frac{-c_1\alpha_1}{\alpha_1-1}\cdot P_g(x)    \Big(\frac{P_d(x)}{P_g(x)}- c_1^{\frac{\alpha_1\alpha_2-\alpha_2}{\alpha_2-\alpha_1}} \Big(\frac{P_d(x)}{P_g(x)}\Big)^\frac{\alpha_1 \alpha_2 - \alpha_1}{\alpha_2 - \alpha_1} \Big) \nonumber
    \\
    & \frac{-\alpha_2}{\alpha_2-1}\cdot P_g(x) \cdot \Big(\frac{c_1 P_d(x)}{P_g(x)}\Big)^\frac{\alpha_1 \alpha_2 - \alpha_1}{\alpha_2 - \alpha_1} \nonumber
    \\
    &\frac{-\alpha_2}{\alpha_2-1}\cdot P_d(x)\cdot\Big( \frac{c_2 P_g(x)}{P_d(x)}\Big)^\frac{\alpha_1 \alpha_2 - \alpha_1}{\alpha_2 - \alpha_1} \nonumber
    \\
    &\frac{-c_2  \alpha_1}{\alpha_1-1}\cdot P_d(x)  \Big(\frac{P_g(x)}{P_d(x)}-c_2^{\frac{\alpha_1\alpha_2-\alpha_2}{\alpha_2-\alpha_1}}\Big(\frac{P_g(x)}{P_d(x)}\Big)^\frac{\alpha_1 \alpha_2 - \alpha_1}{\alpha_2 - \alpha_1}\Big).
\end{align}
Now writing the same in terms of $f^{\alpha_1,\alpha_2}_c$ defined in \eqref{d2-alpha-gans-fc}, we get
\begin{align}
  K(D_1^*, D_2^*, G)&=c_1 \cdot P_g(x)\cdot f^{\alpha_1,\alpha_2}_{c_1}\Big(\frac{P_d(x)}{P_g(x)}\Big) \nonumber
    \\&+c_2\cdot P_d(x) \cdot f^{\alpha_1,\alpha_2}_{c_2}\Big(\frac{P_g(x)}{P_d(x)}\Big).
\end{align}
The value function then becomes
\vspace{0.1cm}
\begin{align}
   &\int_{\mathcal{X}} \Bigg(c_1  P_g(x) f^{\alpha_1,\alpha_2}_{c_1}\Big(\frac{P_d(x)}{P_g(x)}\Big)+c_2 P_d(x) f^{\alpha_1,\alpha_2}_{c_2}\Big(\frac{P_g(x)}{P_d(x)}\Big) 
    \Bigg) dx \nonumber
\end{align}

\begin{align*}
    \implies V(D_1^*,D_2^*,G)&=c_1\cdot\mathbf{D}_{f^{\alpha_1, \alpha_2}_{c_1}}(P_d\|P_g) \nonumber \\
& +c_2 \cdot \mathbf{D}_{f^{\alpha_1, \alpha_2}_{c_2}}(P_g\|P_d).
\end{align*}
Minimization of this value function results to a minimum value of 
\begin{equation*}
    V(D_1^*,D_2^*,G^*)=c_1\cdot f^{\alpha_1, \alpha_2}_{c_1}(1) + c_2 \cdot f^{\alpha_1, \alpha_2}_{c_2}(1) \enspace \text{when $P_g=P_d$}.
\end{equation*}
Calculating $f_c(1)$,
\begin{align}
    &f_c(1)= \frac{-\alpha_1}{\alpha_1-1}\Big(1-c^{\frac{\alpha_1\alpha_2-\alpha_2}{\alpha_2-\alpha_1}}\Big)-\frac{\alpha_2}{\alpha_2-1}\Big(c^{\frac{\alpha_1\alpha_2-\alpha_2}{\alpha_2-\alpha_1}}\Big) \nonumber \\
    &V(D_1^*,D_2^*,G^*)=
-\frac{\alpha_1}{\alpha_1-1}(c_1+c_2) \nonumber
    \\
    &+ \Big(\frac{\alpha_1}{\alpha_1-1}-\frac{\alpha_2}{\alpha_2-1}\Big)\cdot\Big( c_1^{\frac{\alpha_1\alpha_2-\alpha_1}{\alpha_2-\alpha_1}}+c_2^{\frac{\alpha_1\alpha_2-\alpha_1}{\alpha_2-\alpha_1}} \Big).
\end{align}

% -------------------------------------------------

\subsection{Proof of Remark 3}
\label{proof-remark3}
We prove that on applying limits $\alpha_1\rightarrow1$ and $\alpha_2\rightarrow\infty$ to \eqref{d2-alpha-gans-final-optimization}, we can recover the optimization problem of D2 GANs \cite[Equation (3)]{nguyen2017dual}. Expanding $\mathbf{D}_{f_{c_1}^{\alpha_1, \alpha_2}}(P_d\|P_g)$  using the definition of $f_c$, we get
\begin{align}
   &\mathbf{D}_{f_{c_1}^{\alpha_1, \alpha_2}}(P_d\|P_g)= \nonumber
    \\
    &\int_{\mathcal{X}} P_g(x)\cdot \Bigg( -\frac{\alpha_1}{\alpha_1-1}\Big( \frac{P_d(x)}{P_g(x)}-c_1^{\frac{\alpha_1\alpha_2-\alpha_2}{\alpha_2-\alpha_1}}\frac{P_d(x)}{P_g(x)}^{\frac{\alpha_1\alpha_2-\alpha_1}{\alpha_2-\alpha_1}}\Big) \nonumber
    \\
    &-\frac{\alpha_2}{\alpha_2-1}\Big(c_1^{\frac{\alpha_1\alpha_2-\alpha_2}{\alpha_2-\alpha_1}}\cdot \Big(\frac{P_d(x)}{P_g(x)}\Big)^{\frac{\alpha_1\alpha_2 - \alpha_1}{\alpha_2-\alpha_1}}\Big) \Bigg) \cdot dx .
\end{align}

Applying limits $\alpha_2 \rightarrow \infty$, we get

\begin{align}
    \int_{\mathcal{X}} P_g(x)\cdot \Bigg( &-\frac{\alpha_1}{\alpha_1-1}\Big( \frac{P_d(x)}{P_g(x)}-c_1^{\alpha_1-1} \Big(\frac{P_d(x)}{P_g(x)}\Big)^{\alpha_1}\Big) \nonumber
    \\
    &-1\cdot\Big(c_1^{\alpha_1-1} \Big(\frac{P_d(x)}{P_g(x)}\Big)^{\alpha_1}\Big) \Bigg) \cdot dx.
\end{align}
Now, we have 
\begin{align}
c_1 &\mathbf{D}_{f_{c_1}^{\alpha_1,\infty}}(P_d  \,\|\, P_g) 
= \nonumber
\\
&\int_{\mathcal{X}} c_1 P_g(x) \left[ -\frac{\alpha_1}{\alpha_1 - 1} \left( \frac{P_d(x)}{P_g(x)}  - c_1^{\alpha_1 - 1} \cdot \left( \frac{P_d(x)}{P_g(x)} \right)^{\alpha_1} \right)\right. \nonumber \\
&- \left. c_1^{\alpha_1-1} \left( \frac{P_d(x)}{P_g(x)} \right)^{\alpha_1} \right] dx \nonumber \\
&= \int_{\mathcal{X}} P_g(x) \Bigg[ \frac{-\alpha_1}{\alpha_1 - 1} \cdot c_1\cdot \frac{P_d(x)}{P_g(x)} \nonumber \\
&+ \frac{\alpha_1}{\alpha_1 - 1} \cdot \left( c_1 \cdot \frac{P_d(x)}{P_g(x)} \right)^{\alpha_1} - c_1^{\alpha_1} \left( \frac{P_d(x)}{P_g(x)} \right)^{\alpha_1} \Bigg] dx \nonumber \\
&= \int_{\mathcal{X}} P_g(x) \left[ \frac{-\alpha_1}{\alpha_1 - 1} \cdot c_1\cdot \frac{P_d(x)}{P_g(x)} - c_1^{\alpha_1} \left( \frac{P_d(x)}{P_g(x)} \right)^{\alpha_1} \right. \nonumber \\
&+ \left. \left(c_1 \cdot \frac{P_d(x)}{P_g(x)}\right) \cdot \frac{\alpha_1}{\alpha_1 - 1} \cdot \left( c_1 \cdot \frac{P_d(x)}{P_g(x)} \right)^{\alpha_1-1}  \right] dx \nonumber
\end{align}
\begin{align}
&= \int_{\mathcal{X}} P_g(x) \left[ \frac{-\alpha_1}{\alpha_1 - 1} \cdot c_1\cdot \frac{P_d(x)}{P_g(x)} - c_1^{\alpha_1} \left( \frac{P_d(x)}{P_g(x)} \right)^{\alpha_1} \right. \nonumber \\
&- \left(c_1 \cdot \frac{P_d(x)}{P_g(x)}\right) \cdot \frac{\alpha_1}{\alpha_1 - 1} \cdot \left(1 - \left( c_1 \cdot \frac{P_d(x)}{P_g(x)} \right)^{\alpha_1-1}\right) \nonumber \\
&+ \left. \frac{\alpha_1}{\alpha_1 - 1} \cdot c_1\cdot \frac{P_d(x)}{P_g(x)} \right] dx  
\label{eq000}
\end{align}
 Using the fact that
\begin{equation}
    \lim_{\alpha\rightarrow1} \frac{\alpha}{\alpha-1}\cdot\Big(1-t^{\alpha-1}\Big) = -\log(t),
\end{equation}
and continuing from \eqref{eq000}, we get
\begin{align}
    c_1 &\mathbf{D}_{f_{c_1}^{\alpha_1,\infty}}(P_d  \,\|\, P_g) \nonumber \\
    &= \int_{\mathcal{X}} P_g(x) \left[ c_1 \cdot \frac{P_d(x)}{P_g(x)} \cdot log\left(c_1\cdot\frac{P_d(x)}{P_g(x)}\right) - c_1\frac{P_d(x)}{P_g(x)} \right] dx \nonumber\\
    &= c_1\int_{\mathcal{X}} P_d(x) \log\left(c_1\cdot\frac{P_d(x)}{P_g(x)}\right) dx - c_1\int_{\mathcal{X}} P_d dx \nonumber\\
    &= c_1\int_{\mathcal{X}} P_d(x) \log(c_1) \enspace dx + c_1\int_{\mathcal{X}} P_d(x) \log\left(\frac{P_d(x)}{P_g(x)}\right)\enspace dx \nonumber\\
    &- c_1\int_{\mathcal{X}} P_d(x) \enspace dx \nonumber.\\
    &\text{Rearranging the terms, we get} \nonumber\\
    &c_1 \mathbf{D}_{f_{c_1}^{\alpha_1, \alpha_2}}(P_d \,\|\, P_g)= c_1 \log c_1 - c_1 + c_1 \mathbf{D}_{\text{KL}}(P_d \,\|\, P_g).
\end{align}
Similarly, for taking limits on  $c_2\cdot\mathbf{D}_{f_{c_2}^{\alpha_1, \alpha_2}}(P_g\|P_d)$, we can change $c_1$ to $c_2$ and interchange $P_g$ and $P_d$ in the above analysis This recovers the D2 GANs value function \cite[Theorem 2]{nguyen2017dual}.

% \begin{equation*}
% \begin{aligned}
%     \min_G \Big(c_1 \cdot \mathbf{D}_{f_{c_1}}(P_d \| P_g) + c_2 \cdot \mathbf{D}_{f_{c_2}}(P_g \| P_d)\Big) 
% \end{aligned}
% \end{equation*}

% -------------------------------------------------

\subsection{Proof of Theorem 2}
\label{proof-theorem2}
% We know that value function for general dual discriminator GANs is given by \eqref{general-loss-value-function}. Now using $\ell_{\alpha_1}$ and $\ell_{\alpha_2}$ in place of $\ell_1$ and $\ell_2$ where $\ell_{\alpha}$ is given as
% \begin{equation*}
%     \ell_\alpha(t) = \frac{\alpha}{\alpha-1}(1-t^{\frac{\alpha-1}{\alpha}})
% \end{equation*}
% We try to recover the dual discriminator $\alpha$-GANs introduced. Focusing only on the first 2 terms of the value function for simplicity, we try to formulate them in terms of $f$-divergence.
We need to prove that for any functions $\ell_1$ and $\ell_2$ with inputs from $\mathbb{R}^+$, the optimization problem of the value function in \eqref{general-loss-value-function} simplifies to minimization of a linear combination of $f$ and \textit{reverse} $f$-divergences. Note that we can separate the $\sup_{D_1}$ individually over the first 2 terms and $\sup{D_2}$ over the next 2 terms because the first 2 terms contain only $D_1$ and the next 2 terms contain only $D_2$. Focusing on the first 2 terms of the value function and considering the $\sup_{D_1}$ optimization over them, we get 
\begin{align}
    \sup_{D_1 : x \rightarrow\mathbb{R}^+} \sum_x &  -c_1 P_d(x) \ell_1(D_1(x)) + P_g(x) \ell_2(D_1(x))
    \nonumber \\
    =& \sum_x \sup_{t \in \mathbb{R}^+}  -c_1 P_d(x) \ell_{1}(t) + P_g(x) \ell_{2}(t) \nonumber\\
    =& \sum_x P_g(x) \cdot c_1 \sup_{t \in \mathbb{R}^+} \Big( -\frac{P_d(x)}{P_g(x)} \ell_{1}(t) + \frac{1}{c_1}\ell_2(t)\Big)\nonumber\\    
    =& c_1 \sum_x P_g(x) f_{c_1}(\frac{P_d(x)}{P_g(x)}) \nonumber \\
    =& c_1 \mathbb{D}_{f_{c_1}}(P_d \| P_g),  \nonumber \\
    \text{where}\quad & f_{c_1}(u) = \sup_{t \in \mathbb{R}^+} \Big[-u \cdot \ell_{1}(t) + \frac{1}{c_1}\ell_{2}(t)\Big].
\end{align}

 Also, $f(u)$ is a convex function since supremum of a set of affine functions is always convex \cite{nguyen2009surrogate}. Therefore, the final term is nothing but an $f$-divergence. Similarly, the next 2 terms in the value function evaluate to another $f$-divergence.

\begin{align}
    \sup_{D_2 : x \rightarrow\mathbb{R}^+} &\sum_x  P_d(x) \ell_{2}(D_2(x))  
    -c_2 P_g(x) \ell_{1}(D_2(x)) \nonumber\\
    &= \sum_x \sup_{t \in \mathbb{R}^+} 
    P_d(x) \ell_{2}(t)  
    -c_2 P_g(x) \ell_{1}(t)
    \nonumber\\
    &= \sum_x c_2 \cdot P_g(x) \sup_{t \in \mathbb{R}^+} \Big(\frac{P_d(x)}{P_g(x)} \frac{\ell_{2}(t)}{c_2} -  \ell_1(t)\Big) \nonumber \\   
    & = c_2 \sum_x P_g(x) f^{'}_{c_2}\Big(\frac{P_d(x)}{P_g(x)}\Big) \nonumber\\
    &= c_2 \mathbb{D}_{f^{'}_{c_2}}(P_d \| P_g) \nonumber \\
    &= c_2 \mathbb{D}_{f_{c_2}}(P_g \| P_d), \nonumber \\
    \text{where} \quad &  f^{'}_{c_2}(u) = u\cdot f_{c_2}(1/u) = \sup_{t \in \mathbb{R}^+} \Big[ u \frac{\ell_{2}(t)}{c_2} - \ell_{1}(t)\Big].
\end{align}

\subsection{Proof of Proposition 1}
\label{proof-proposition1}
Consider
\begin{align*}
    V(D, G) &= \mathbb{E}_{X \sim P_d}[-\ell_{\alpha_1}(1,D(X))] \nonumber\\
    &+ \mathbb{E}_{X \sim P_{g}}[-\ell_{\alpha_2}(0,D(X))]\nonumber
\end{align*}
\begin{align}
    &=\int_{\mathcal{X}}-P_d(x)\cdot\ell_{\alpha_1}(1,D(x)) - P_{g}(x)\cdot(\ell_{\alpha_2}(0,D(x))) \enspace dx\nonumber\\
    &= -\int_{\mathcal{X}} P_d(x)\cdot\frac{\alpha_1}{\alpha_1-1}\cdot\Big(1-D(x)^{\frac{\alpha_1-1}{\alpha_1}}\Big)\nonumber\\
    &\quad\text{ } -P_{g}(x)\cdot \frac{\alpha_2}{\alpha_2-1}\cdot\Big(1-(1-D(x))^{\frac{\alpha_2-1}{\alpha_2}}\Big) \enspace dx
\end{align}
Differentiating the expression inside integral with respect to discriminator $D(x)$ and equating it to 0 to find the optimal discriminators, we get
\begin{align}
    &P_d(x)\cdot D(x)^{-1/\alpha_1}-P_{g}(x)\cdot (1-D(x))^{-1/\alpha_2}=0 \nonumber
    \\
    &\implies P_d(x)D(x)^{-1/\alpha_1}=P_g(x)(1-D(x))^{-1/\alpha_2}
\end{align}
At convergence, when $P_d=P_{g}$, optimal discriminator should output real and fake labels with equal probability. Thus, $D(x)=\frac{1}{2}$ should hold. However, from the above equations, we get 
\begin{equation*}
    D(x)^{\alpha_2}=(1-D(x))^{\alpha_1}
\end{equation*}
If the optimal discriminator is given by $D^*(x)=\frac{1}{2}$, for all $x\in\mathcal{X}$, then the above implies that $\alpha_1=\alpha_2$, which is a contradiction. Therefore, the use of two different $\alpha$-losses in \eqref{General-Loss-GANs} is not meaningful.

\fi

\end{document}